\ifcvprfinal\pagestyle{empty}\fi
\begin{document}

\title{A Wasserstein GAN model with the total variational regularization}

\author{Lijun Zhang\\
Shanghai, China\\
\and
Yujin Zhang\\
Shanghai, China\\
\and
Yongbin Gao\\
Shanghai, China\\
}

\maketitle

\begin{abstract}
It is well known that the generative adversarial nets (GANs) are remarkably difficult to train. The recently proposed Wasserstein GAN (WGAN) creates principled research directions towards addressing these issues. But we found in practice that gradient penalty WGANs (GP-WGANs) still suffer from training instability. In this paper, we combine a Total Variational (TV) regularizing term into the WGAN formulation instead of weight clipping or gradient penalty, which implies that the Lipschitz constraint is enforced on the critic network. Our proposed method is more stable at training than GP-WGANs and works well across varied GAN architectures. We also present a method to control the trade-off between image diversity and visual quality. It does not bring any computation burden.
\end{abstract}

\section{Introduction}\label{section:Introduction}
Though there has been an explosion of GANs in the last few years\cite{2016-Radford-DCGAN, 2014-Kingma-SSGAN, 2014-Mirza-CGAN, 2017-zhu-CycleGAN, 2017-Zhang-StackGAN, 2018-Liu}, the training of some of these architectures were unstable or suffered from mode collapse. During training of a GAN, the generator $G$ and the discriminator $D$ keep racing against each other until they reach the Nash equilibrium, more generally, an optimum. In order to overcome the training difficulty, various hacks are applied depending on the nature of the problem. Salimans $et$ $al$. \cite{2016-Salimans} presents several techniques for encouraging convergence, such as feature matching, minibatch discrimination, historical averaging, one-sided label smoothing and virtual batch normalization. In \cite{2017-Arjovsky-WGAN0}, authors explain the unstable behaviour of GANs training in theory and in \cite{2017-Arjovsky-WGAN1} the Wasserstein GANs (WGANs) with weight clipping was proposed. Because the original WGANs still sometimes generate only poor samples or fail to converge, its gradient penalty version (GP-WGANs) was proposed in \cite{2017-Gulrajani-WGAN2} to address these issues. Since then several WGAN-based methods were proposed\cite{2017-Cui, 2018-Adler, 2018-Gemici}. The Boundary equilibrium generative adversarial networks (BEGANs) was proposed in \cite{2017-Berthelot-BEGAN}. Its main idea is to have an auto-encoder as a discriminator, where the loss is derived from the Wasserstein distance between the reconstruction losses of real and generated images:

WGAN requires that the discriminator (or called as critic) function must satisfy 1-Lipschitz condition, which is enforced through gradient penalty in \cite{2017-Gulrajani-WGAN2}. The GP-WGANs are much more stable in training than the original weight clipping version. However, we found in practice that GP-WGANs still have such drawbacks: 1) Fail to converge with homogeneous network architectures. 2) Weights explode for a high learning rate. 3) Losses fluctuate drastically even after long term training. BEGANs improve the training stability significantly and are able to generate high quality images, but the auto-encoder is high resource consuming.

In this paper, we conduct some experiments to show the above defects of GP-WGANs. Then with exploration about the theoretical property of the Wasserstein distance, we choose to add a TV regularization term rather than the gradient penalty term into the formulation of the objective. Compared with GP-WGANs and BEGANs, our approach is simple and effective, but it is much stable for training of varied GAN architectures. Additionally, we introduce a margin factor which is able to control the trade-off between image diversity and visual quality.

We make the following contributions:
1) An Wasserstein GAN model with TV regularization (TV-WGAN) is proposed. It is much simpler but is more stable than GP-WGANs. 2) The TV term implies that the 1-Lipschitz constraint is enforced on the discriminative function. We try to give a rough proof for it. 3) A margin factor is introduced to control the trade-off between generative diversity and visual quality.

The remainder of this paper is organized as follows. In Section \ref{section:Background}, we first introduce the preliminary theory about training of GANs. In Section \ref{section:GP-WGAN defects}, we demonstrate the defects of GP-WGANs by experiments. In Section \ref{section:my method}, we describe the proposed TV regularized method in detail. The implementation and results of our approach are given in Section \ref{section:experiment}. It is summarized in Section \ref{section:conclusion}.

\section{Background}\label{section:Background}

\subsection{The Training of GAN}

In training of an original GAN, the generator $G$ and the discriminator $D$ play the following two-player minimax game\cite{2014-Goodfellow-GAN}:
\begin{equation}\label{e1}
 \min_G\max_D \{\mathbb{E}\log D(x)+\mathbb{E}\log (1-D(G(z)))\}
\end{equation}
This objective function of cross entropy is equivalent to the following Jensen-Shannon divergence form:
\begin{equation}\label{e2}
  \min_G\max_D JSD(P_r\|P_g)
\end{equation}
where the real data $x\sim P_r$, the generated data $G(z)\sim P_g$, and $JSD(\cdot)$ measures the similarity of the distribution $P_r$ and $P_g$. When $P_r=P_g$, their JSD is zero, means that the generator perfectly replicating the real data generating process. However, it is always remarkably difficult to train GANs, which suffer from unstable and mode collapse.

\subsection{Wasserstein distance}

As discussed in \cite{2017-Arjovsky-WGAN0}, the distribution $P_r$ and $P_g$ have disjoint supports because their supports lie on low dimensional manifolds of a high dimensional space. Therefore their JSD is always a constant $\log2$, which means that the gradient will be zero almost everywhere. Besides gradient vanishing, GANs can also incur gradient instability and mode collapse problems, if the generator takes $-\log D(G(z))$ as its loss function. In order to get better theoretical properties than the original, WGANs leverages the Wasserstein distance between $P_r$ and $P_g$ to produce an objective function\cite{2017-Arjovsky-WGAN1}:
\begin{equation}\label{e3}
  W(P_r,P_g)=\inf_{\gamma\in(P_r,P_g)}\mathbb{E}_{(x,\widetilde{x})\sim \gamma}\|x-\widetilde{x}\|
\end{equation}
where $x\sim P_r$ and $\widetilde{x}\sim P_g$ implicitly defined by $\widetilde{x}=G(z)$. Since the infimum is highly intractable, with Kantorovich-Rubinstein duality, it can be reformulated as\cite{2017-Gulrajani-WGAN2}:
\begin{equation}\label{e4}
  \min_G\max_{D\in \mathcal{D}} \mathbb{E}D(x)-\mathbb{E}D(G(z))
\end{equation}
where the $\mathcal{D}$ is the set of 1-Lipschitz functions. To enforce the Lipschitz constraint on the discriminator, \cite{2017-Arjovsky-WGAN1} proposes to clip the weights of the discriminator to lie within a compact space $[-c,c]$. In \cite{2017-Gulrajani-WGAN2}, it is found that weight clipping will result in vanishing or exploding gradients, or weights will be pushed towards the extremes of the clipping range. Hence the gradient penalty method is proposed:
\begin{equation}\label{e5}
  L = \mathbb{E}D(x)-\mathbb{E}D(\widetilde{x})+\lambda\mathbb{E}[(\|\nabla_{\widehat{x}}D(\widehat{x})\|_{2}-1)^{2}]
\end{equation}

\section{Difficulties with gradient penalty}\label{section:GP-WGAN defects}

Though the GP-WGAN is theoretically elegant, usually the gradient operation is sensitive to noise. Probably due to this reason, training of GP-WGANs is still problematic. We illustrate this by running experiments on image generation using the GP-WGAN algorithm.

\subsection{Unstable for the homogeneous network architecture}\label{section:sub-homogeneous-network}

The training of GP-WGAN will be unstable for such a network architecture as shown in Figure~\ref{fig:network-1}. It is similar to that of the BEGAN\cite{2017-Berthelot-BEGAN}, except that our discriminator does not have an auto-decoder inside it. This structure is homogeneous since it is composed of repeated convolutional layers. Usually, GANs with such homogeneous structure is hard to train due to its lack of either batch normalization\cite{2015-Ioffe-BatchNorm} or dropout layers. Therefore this structure can be taken as a benchmark for evaluating the training stability of a GAN model. We trained the GP-WGAN with this network structure. As we observed in our experiments, it resulted in exploding gradients in GP-WGAN as shown in Figure~\ref{fig:gradient-explode}.

\begin{figure}[t]
\begin{center}
   \includegraphics[width=1.0\linewidth]{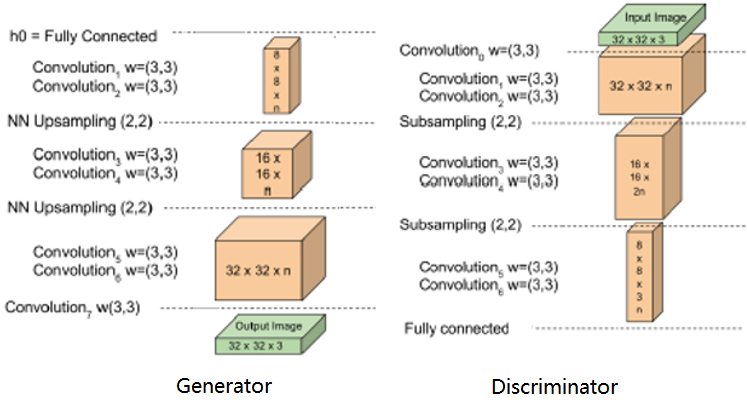}
\end{center}
   \caption{A homogeneous network structure which is similar to BEGANs.}
\label{fig:network-1}
\end{figure}

\begin{figure}[t]
\begin{center}
   \includegraphics[width=1.0\linewidth]{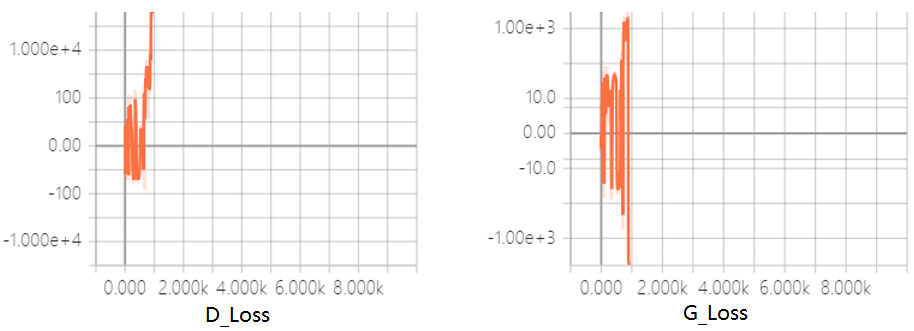}
\end{center}
   \caption{Gradient explosion of GP-WGANs. This happens when 1) using a homogeneous network structure, or 2) the learning rate is high}
\label{fig:gradient-explode}
\end{figure}

\subsection{Drastic fluctuation in long term}\label{session:sub-dcgan-network}

We tested the GP-WGAN with a carefully designed structure as shown in figure \ref{fig:network-2}. This network is similar to that of the standard DCGAN\cite{2016-Radford-DCGAN}, where the batch normalization is applied in both the generator and the discriminator. It is much easy to train this network because its structure is carefully designed for GANs. Not surprisingly, the GP-WGAN with this structure is capable of being trained to generate good samples. However, the loss curve of the GP-WGAN keeps intense fluctuating even after a long term training, which indicates that the training of the GP-WGAN is potentially unstable. In our experiments, this phenomenon has recurred repeatedly for both CIFAR-10 and CelebA datasets, as shown in Figure \ref{fig:loss-fluctuate}.

\begin{figure}[t]
\begin{center}
   \includegraphics[width=1.0\linewidth]{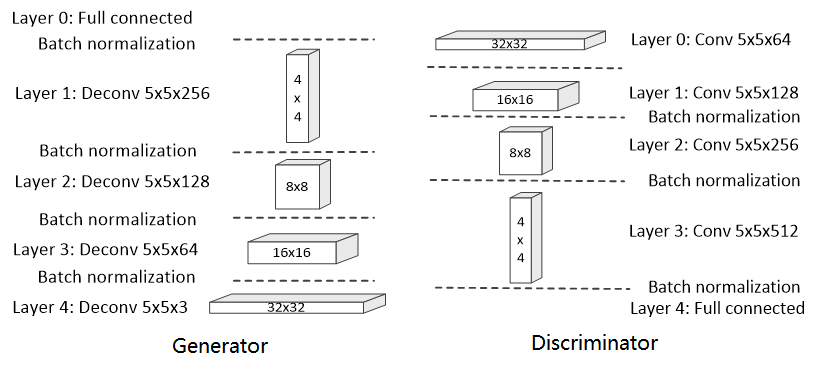}
\end{center}
   \caption{A carefully designed network structure similar to DCGANs.}
\label{fig:network-2}
\end{figure}

\begin{figure}[t]
\begin{center}
   \includegraphics[width=1.0\linewidth]{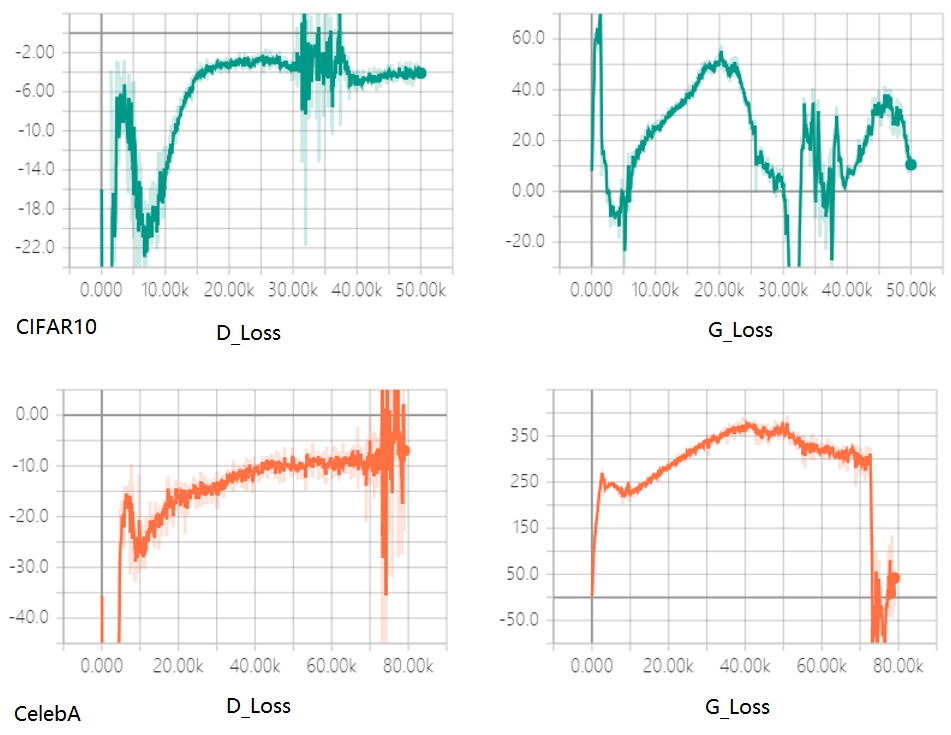}
\end{center}
   \caption{Drastic fluctuation of GP-WGANs after a long term traing with learning rate=1e-5.}
\label{fig:loss-fluctuate}
\end{figure}

\subsection{Sensitive to learning rate}\label{session:sub-high-learning-rate}

The third drawback of GP-WGAN is sensitivity to the learning rate of training. In the last experiment, if we enlarge the learning rate from 1e-5 to 1e-4 which is suitable for most GAN models including BEGANs, the GP-WGAN becomes unstable and gradients explode rapidly. The loss curves are the same as in Figure~\ref{fig:gradient-explode}, so we don't re-draw it here. Certainly the learning rate plays a significant role in most deep learning networks. A too high learning rate always makes them difficult to converge, but rarely leads gradient explosion. This illustrates that GP-WGANs are potentially unstable.

\section{Proposed method}\label{section:my method}

In this section, we first present the TV-WGANs along with their benefits in section \ref{section:sub-TV-WGAN}. A rough proof for the enforcement of Lipschitz constraint by the TV term is given in section \ref{section:sub-lipschitz}. Then in section \ref{section:sub-my-objective} we present the objective function with a margin factor which controls the trade-off between generative diversity and visual quality. Finally, the benefits of our approach are discussed in section \ref{section:sub-my-network}.

\subsection{Total variational WGAN}\label{section:sub-TV-WGAN}

The WGAN objective Equation (\ref{e4}) could be explained in a intuitive way: The discriminator $D$ is trained to make its output as high as possible for real data $x$, and as low as possible for fake data $\widetilde{x}$; The generator $G$ which are trained to produce fake images tries to make the discriminator to give a high output for them. Since the discriminator works as a real-fake critic in a WGAN model, it tries to make its output separated as far as possible for real data and fake data. As a counterpart in the adversarial game, the generator tries to make the discriminative output as close as possible to each other. Besides, Equation (\ref{e4}) implies that the discriminator must be a smooth 1-Lipschitz function. Without this constraint, the training will diverge continuously. As shown in Figure \ref{fig:no-Lipschitz}, the $D(x)$ and $D(\widetilde{x})$ tends to increase and decrease infinitely until exceed the range of machines floating point number.

\begin{figure}[t]
\begin{center}
   \includegraphics[width=1.0\linewidth]{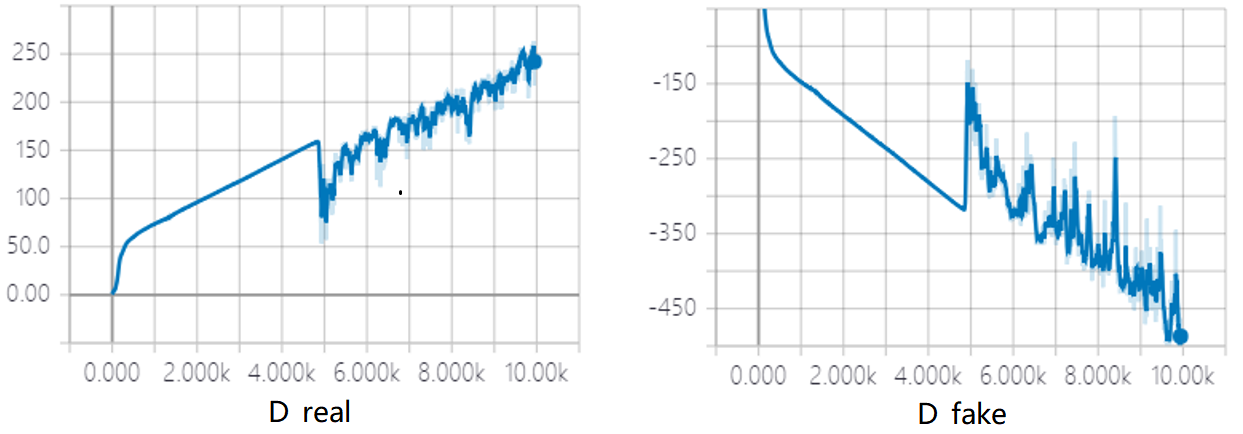}
\end{center}
   \caption{Discriminator output in training for real data and fake data without 1-Lipschitz constraint.}
\label{fig:no-Lipschitz}
\end{figure}

To enforce 1-Lipschitz constraint on discriminative functions, the weighting clipping and the gradient penalty methods are proposed. However, both these methods have their defects as demonstrated in section~\ref{section:GP-WGAN defects}. In this section we introduced an alternative in the form of total variational regularization $\mid D(x)-D(\widetilde{x})-\delta\mid$, where $\delta$ is a wanted margin between discriminative output for real data and fake data. The meaning of the margin $\delta$ is shown in Figure~\ref{fig:magin-factor}. Hence the objective function is formulated as:
\begin{equation}\label{e6}
  \min_G\max_D \mathbb{E}D(x)-\mathbb{E}D(\widetilde{x})+\lambda\mathbb{E}[\mid D(x)-D(\widetilde{x})-\delta\mid]
\end{equation}
where $\lambda$ is the regularization factor which is set to 1 across this paper. We assume that it is equivalent to enforce an 1-Lipschitz constraint via this TV term. About this point, a simple proof will be given in the next section.

\begin{figure}[t]
\begin{center}
   \includegraphics[width=0.6\linewidth]{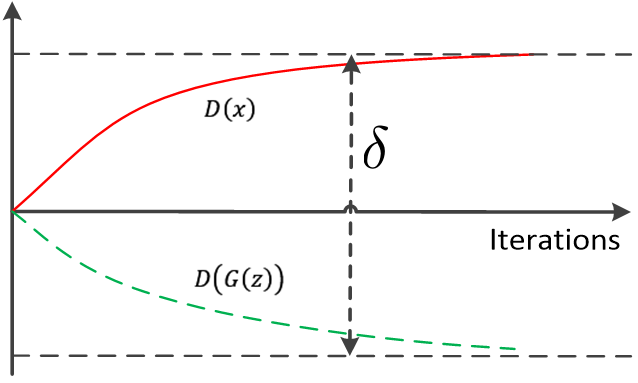}
\end{center}
   \caption{The margin between discriminative outputs of real data and fake data.}
\label{fig:magin-factor}
\end{figure}

\subsection{The 1-Lipschitz constraint}\label{section:sub-lipschitz}
\newtheorem{mypro}{Proposition}
\begin{mypro}
The marginal TV regularizing term $\mid D(x)-D(\widetilde{x})-\delta\mid$ in equation~\ref{e6} enforces the 1-Lipschitz constraint on the discriminative function $D(x)$.
\end{mypro}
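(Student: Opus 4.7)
The plan is to analyze the stationarity condition of the inner maximization of equation~(\ref{e6}) and argue that the TV penalty behaves like a Lagrange multiplier for the 1-Lipschitz constraint that appears in the Kantorovich--Rubinstein dual~(\ref{e4}). Concretely, I want to show that at an optimal critic $D^*$, the TV term pins the difference $D^*(x) - D^*(\widetilde{x})$ to the margin $\delta$ on pairs drawn from $(P_r, P_g)$, and that this, together with a triangle-inequality extension along transport rays, yields the bound $|D^*(x_1) - D^*(x_2)| \le \|x_1 - x_2\|$.

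First, I would write the inner objective as $F(D) = \mathbb{E}_{P_r} D(x) - \mathbb{E}_{P_g} D(\widetilde{x}) + \lambda\,\mathbb{E}[|D(x) - D(\widetilde{x}) - \delta|]$ and take a variational derivative in the direction of a smooth perturbation $\varphi$ of $D$. The linear WGAN part contributes $\int \varphi\, d(P_r - P_g)$, while the non-smooth TV part contributes a subgradient of magnitude $\lambda$ with sign $\mathrm{sign}(D(x) - D(\widetilde{x}) - \delta)$ at smooth points and the full interval $[-\lambda, \lambda]$ at the kink. Requiring zero to lie in the total subdifferential forces $D^*(x) - D^*(\widetilde{x}) = \delta$ for $\gamma$-almost every matched pair (provided $\lambda$ dominates the linear term; with $\lambda = 1$ as in the paper this holds in the averaged sense that the TV penalty pins the mean gap to $\delta$).

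Next, I would invoke the standard Kantorovich--Rubinstein characterization, under which an optimal critic in~(\ref{e4}) saturates the Lipschitz inequality along optimal transport rays, i.e.\ $D^*(x) - D^*(\widetilde{x}) = \|x - \widetilde{x}\|$ on $\gamma$-typical pairs. Identifying $\delta$ with the effective transport cost between matched real and fake points, the equation $D^*(x) - D^*(\widetilde{x}) = \delta$ from the previous paragraph is exactly the tight case of $|D(x) - D(\widetilde{x})| \le \|x - \widetilde{x}\|$. A chain of triangle inequalities along interpolated transport paths (in practice the same straight-segment interpolants used by the gradient-penalty construction) then extends the bound from $\gamma$-typical pairs to arbitrary pairs in the joint support.

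The main obstacle, and the reason the paper calls the argument \emph{rough}, is precisely the last extension step: the TV term only controls $D(x) - D(\widetilde{x})$ on pairs actually drawn from the coupling between $P_r$ and $P_g$, and says nothing directly about pairs $(y_1, y_2)$ both lying in $P_r$, both in $P_g$, or both off-support. To close this gap I would either invoke continuity/smoothness of the neural-network critic so that a Lipschitz estimate on a dense family of matched pairs propagates to neighbouring points, or restrict the claim to the effective evaluation domain of the WGAN dual, namely the union of the supports of $P_r$ and $P_g$. I would state this assumption explicitly at the end, since it is the single place where mathematical rigour is traded for a plausibility argument.
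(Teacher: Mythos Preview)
Your proposal takes a genuinely different route from the paper. The paper does not argue via Kantorovich--Rubinstein duality, subdifferentials, or transport rays at all. Instead, it reasons through the training dynamics: it writes the gradient-descent update $\theta_{n+1}=\theta_n+\eta\nabla\theta_n$, Taylor-expands $D_{n+1}(x)\approx D_n(x)+\eta\nabla_x D(x)$ (and similarly for $\widetilde{x}$), observes that the TV term keeps $|D_n(x)-D_n(\widetilde{x})-\delta|<\epsilon$ uniformly in $n$, and then subtracts the two expansions at consecutive iterates to bound $|\nabla_x D(x)-\nabla_{\widetilde{x}} D(\widetilde{x})|<2\epsilon/\eta$. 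Combining this with the sign information $\mathbb{E}[\nabla_x D(x)]>0$, $\mathbb{E}[\nabla_{\widetilde{x}} D(\widetilde{x})]<0$, the paper concludes that $\nabla_x D(x)$ itself is bounded by roughly $\epsilon/\eta$, hence $D$ is $k$-Lipschitz with $k=\epsilon/\eta$. Your approach is more ``structural'' (it tries to tie the penalty to the dual of the Wasserstein problem), whereas the paper's is purely algorithmic and elementary, never invoking optimal transport beyond the original WGAN objective.

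There is, however, a real gap in your argument that you should flag more sharply than you do. The TV term pins $D^*(x)-D^*(\widetilde{x})$ to the \emph{fixed} scalar $\delta$, whereas the saturation condition in the Kantorovich--Rubinstein dual is $D^*(x)-D^*(\widetilde{x})=\|x-\widetilde{x}\|$, which varies with the pair. Your sentence ``identifying $\delta$ with the effective transport cost between matched real and fake points'' elides this: $\delta$ is a hyperparameter chosen by the user (e.g.\ $0,5,10$ in the experiments), not the pairwise distance. Without an argument that matched pairs under the optimal coupling all have (approximately) the same transport cost $\delta$, the step from ``constant gap $\delta$'' to ``tight Lipschitz inequality along transport rays'' does not go through, and the subsequent triangle-inequality extension has nothing to extend. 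This is a deeper issue than the off-support extension you already acknowledge; the paper's own argument sidesteps it by never appealing to transport geometry and instead bounding gradients directly from the boundedness of successive differences under SGD.
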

\begin{proof}
Given a discriminative function $D(\theta)$, whose parameters $\theta$ are updated by the gradient descent method:
\begin{equation}\label{e7}
 \theta_{n+1}=\theta_n+\eta\nabla\theta_n
\end{equation}
where $\eta$ is the learning rate. Then by the first order Taylor expansion, the discriminative output $D_{n+1}(x)=D(\theta_n+\eta\nabla\theta_n,x)$ for real data can be approximated as:
\begin{equation}\label{e8}
 \begin{aligned}
 D_{n+1}(x)\approx D(\theta_n,x)+\eta\nabla\theta_nD'(\theta_n) \\
 =D_n(x)+\eta\nabla_xD(x)
 \end{aligned}
\end{equation}
On the other hand, its output for fake data is approximately:
\begin{equation}\label{e9}
 D_{n+1}(\widetilde{x})=D_n(\widetilde{x})+\eta\nabla_{\widetilde{x}}D(\widetilde{x})
\end{equation}
Considering that the discriminator is trained to give a continuously increasing output for real data, and to give a continuously decreasing output for fake data, hence
\begin{equation}\label{e10}
 \begin{cases}
 \mathbb{E}[\nabla_xD(x)]>0 \\
 \mathbb{E}[\nabla_{\widetilde{x}}D(\widetilde{x})]<0 \\
 \end{cases}
\end{equation}
Meanwhile, the difference between $D_n(x)$ and $D_n(\widetilde{x})$ is bounded by the marginal TV regularization, $\ie$, for a constant $\epsilon$, we have
\begin{equation}\label{e11}
  \mid D_n(x)-D_n(\widetilde{x})-\delta\mid <\epsilon
\end{equation}
So by subtracting Equation~(\ref{e8}) from (\ref{e9}), we derive that the difference between $\nabla_xD(x)$ and $\nabla_{\widetilde{x}}D(\widetilde{x})$ is also bounded by
\begin{equation}\label{e12}
  \mid\nabla_xD(x)-\nabla_{\widetilde{x}}D(\widetilde{x})\mid < \frac{2\epsilon}{\eta}
\end{equation}
Considering Equation~(\ref{e10}), we derive that $\nabla_xD(x)$ must also be bounded by $\epsilon/\eta$, which means that $D(x)$ must satisfy the $k$-Lipschitz constraint for $k=\epsilon/\eta$. If $\epsilon$ is small enough, which is controlled by the regularizing factor $\lambda$, then the 1-Lipschitz constraint is enforced.
\end{proof}

In this section, we prove that the 1-Lipschitz constraint is enforced approximately by the TV regularization. Additionally, we draw the histogram of the discriminator¡¯s weights in Figure~\ref{fig:weight-hist} after we train TV-WGAN using CIFAR-10 dataset. As shown in the figure, the weights of the TV-WGAN remain uniform, unlike in the weight clipping WGAN model where weights are pushed towards two extremes of the clipping range.

\begin{figure}[t]
\begin{center}
   \includegraphics[width=0.6\linewidth]{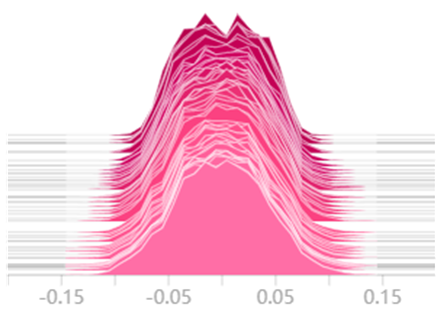}
\end{center}
   \caption{Weights histogram of discriminator in TV-WGAN.}
\label{fig:weight-hist}
\end{figure}

\subsection{Objective and margin factor}\label{section:sub-my-objective}
In Equation~\ref{e6}, the purpose of the TV term is to make the discriminative output of real data and fake data to be separated from each other within a bound. All experiments in this paper use $\lambda=1$. The larger $\lambda$, the stronger the Lipschitz constraint is enforced. We rewrite the loss function of our model as:
\begin{equation}\label{e13}
 \begin{cases}
 L_D=-\mathbb{E}D(x)+\mathbb{E}D(G(z))+\lambda\mathbb{E}|D(x)-D(G(z))-\delta| \\
 L_G=-\mathbb{E}D(G(z)) \\
 \end{cases}
\end{equation}
The margin factor $\delta$ is capable of controlling the trade-off between generative diversity and visual quality. Higher values of $\delta$ lead to higher visual quality because it helps distinguish real data and fake data, so that the generator has to output vivid images with more details. Lower values of $\delta$ lead to higher image diversity.

\subsection{Benefits}\label{section:sub-my-network}
The benefits of TV-WGANs can be derived from two aspects. First, unlike GP-WGANs which directly compute gradients as penalty, the TV-WGANs enforce the Lipschitz constraint via TV regularization which is much more mild than weight clipping and gradient penalty. This results in more stable gradients that neither vanish nor explode, allowing training of more complicated networks, as well as more homogeneous networks.

Second, TV-WGANs do not adding any computation burden. On the contrary, GP-WGANs have to implement the troublesome gradients operation. BEGANs do not solve gradients for Lipschitz constraint, but their discriminator is composed of an auto-encoder and an auto-decoder, which means that they require even more computation resources.

\section{Experiments}\label{section:experiment}
In this section, we run experiments on image generation using our TV-WGAN algorithm and show that there are significant practical benefits to using it over other Wasserstein GANs.

\subsection{Set up}
We conduct experiments on the CIFAR-10 and CelebA datasets whose resolutions are $32\times32$ and $128\times96$ respectively. As for the CelebA dataset, we scale each picture into $64\times48$ for training. We trained our model using the Adam optimizer with an constant learning rate of 1e-4 for CIFAR-10 dataset and 1e-5 for CelebA dataset. Mode collapse will be observed if a high learning rate is used. However gradient explosion never happens in training of our model, unlike the GP-WGAN which is certain to suffer from gradient explosion if using high learning rate.

\subsection{Stability}
We use CIFAR-10 dataset to train the TV-WGAN model with the DCGAN-like network structure which is depicted in section~\ref{session:sub-dcgan-network} and is shown in Figure~\ref{fig:network-2}. The CelebA dataset is used to train the BEGAN-like network which is depicted in section~\ref{section:sub-homogeneous-network} and is shown in Figure~\ref{fig:network-1}. The resulted loss curves are draw in Figure~\ref{fig:stable-dcgan} and \ref{fig:stable-began}. Compared with Figure~\ref{fig:loss-fluctuate}, we believe that our model is much more stable in training than the GP-WGAN which keeps drastic fluctuating even after very long term training.

Further experiments demonstrate that training the GP-WGAN always encounters gradient explosion when using BEGAN-like network structures or using high learning rate, as we have discussed in section~\ref{section:sub-homogeneous-network} and \ref{session:sub-high-learning-rate}. This indicates that the gradient penalty did not make the WGANs stable enough, probably because the derivative control is susceptible to noise. While our TV-WGAN model has never been observed for gradient explosion in all above situations.

We have not so far done more experiments with many other network architectures, but the two structures we used here are typical. The DCGAN-like structure is carefully designed for GANs, with batch normalization layers in both generator and discriminator, so it is actually easy to train. The BEGAN-like structure is too homogeneous and is hard to train due to lack of either batch normalization or dropout layers.

\begin{figure}[t]
\begin{center}
   \includegraphics[width=1.0\linewidth]{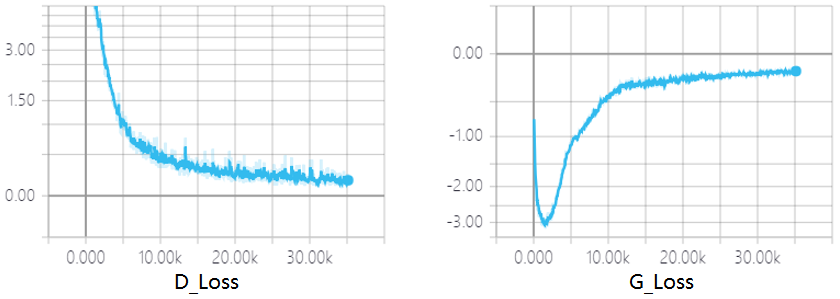}
\end{center}
   \caption{Training TV-WGAN on CIFAR-10 dataset with learning rate=1e-4.}
\label{fig:stable-dcgan}
\end{figure}

\begin{figure}[t]
\begin{center}
   \includegraphics[width=1.0\linewidth]{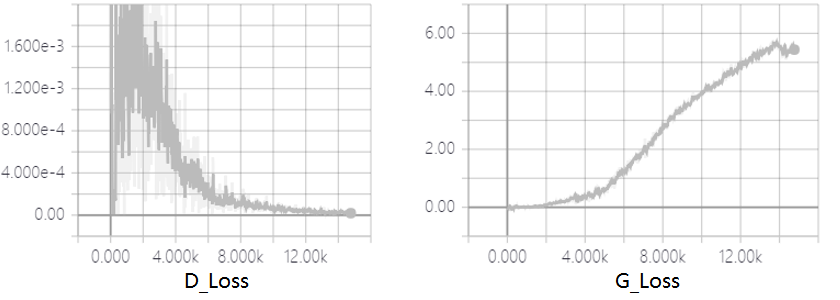}
\end{center}
   \caption{Training TV-WGAN on CelebA dataset with learning rate=1e-5.}
\label{fig:stable-began}
\end{figure}

Figure~\ref{fig:celeba-quality} shows the generated face pictures by the TV-WGAN with the homogeneous network. The generated images do not seem to be of high quality, because we only trained 20 epochs.

\begin{figure}[t]
\begin{center}
   \includegraphics[width=1.0\linewidth]{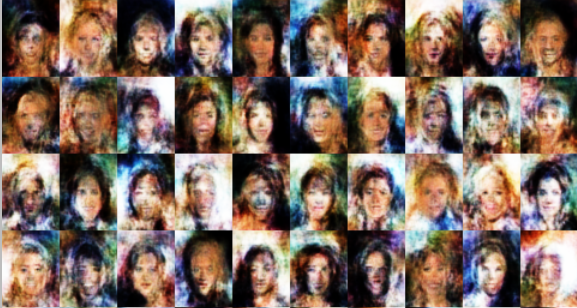}
\end{center}
   \caption{The generated face pictures by the TV-WGAN with 20 epochs.}
\label{fig:celeba-quality}
\end{figure}

\subsection{Effect of margin factor}
In section~\ref{section:sub-my-objective}, we introduced a margin factor $\delta$. Figure~\ref{fig:effect-margin} demonstrate its effect on the generative diversity and visual quality. The effect is obvious when training 50 epochs, that higher values of $\delta$ lead to higher visual quality.

When our model is trained for 100 epochs, the quality of the generated images is not very different from each other for various margin factors. It's hard to distinguish by naked eyes. In section~\ref{section:sub-incept-score}, we will perform the numerical assessment by using inception scores. As shown in table~\ref{tab:incept-score}, the quality of generated images improves with the increase of the margin factor.

\begin{figure}[t]
\begin{center}
   \includegraphics[width=1.0\linewidth]{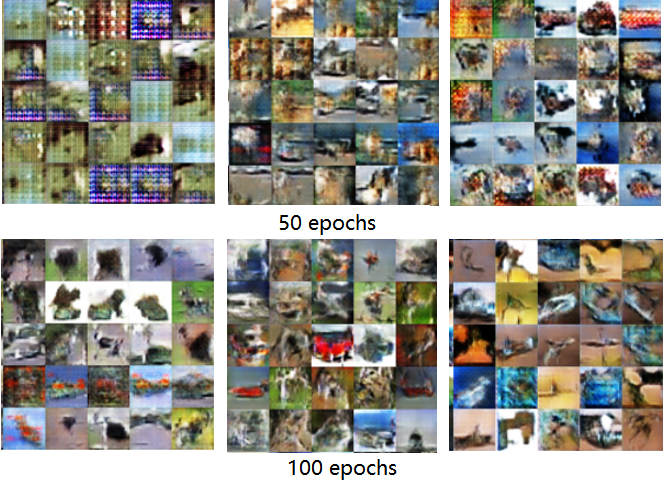}
\end{center}
   \caption{The effect of the margin factor. From left to right, $\delta=0,5,10$}
\label{fig:effect-margin}
\end{figure}

\subsection{Comparison with the BEGAN model}\label{section:sub-compare-began}
The BEGAN is capable of producing high quality images. However, the BEGAN model is prone to mode collapse when it is over-trained, which is verified by our experiments. We trained a TV-WGAN model and a BEGAN model on the CIFAR-10 dataset respectively by using the learning rate of 1e-4. When they are trained for 20 epochs, generated images of our model are of low quality, while the BEGAN model can generate good images, though they are lack of diversity. When we train them for 50 epochs, both models produce good images and the BEGAN generated pictures are of higher quality. But we found that when they are trained for 100 epochs, our model can generate high quality pictures while the BEGAN falls into the mode collapse if it does not use the learning rate decaying. The results are shown in Figure~\ref{fig:compare-began}.

\begin{figure}[t]
\begin{center}
   \includegraphics[width=1.0\linewidth]{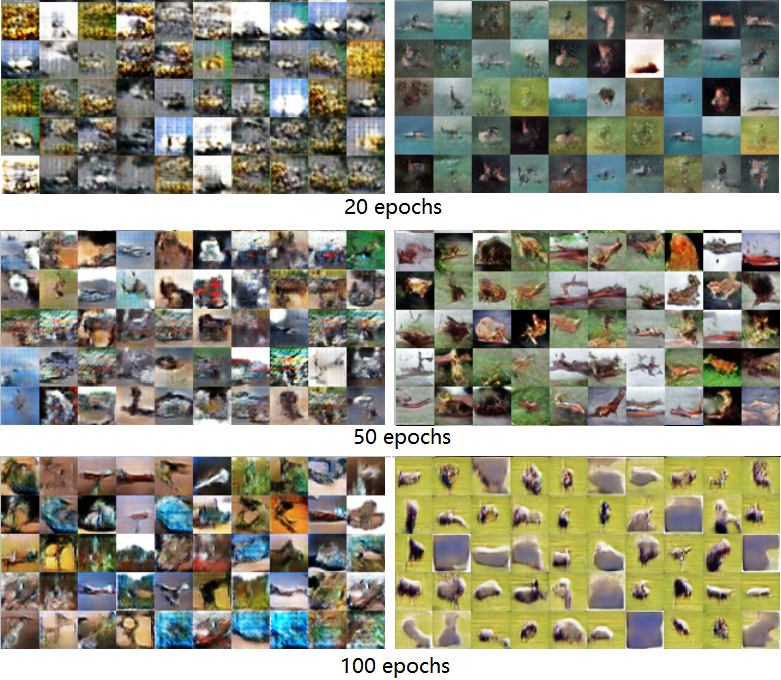}
\end{center}
   \caption{Generated images by the TV-WGAN (left) and the BEGAN model without learning rate decaying (right).}
\label{fig:compare-began}
\end{figure}

\subsection{Inception scores}\label{section:sub-incept-score}
The inception score was considered as a good assessment for sample quality from a dataset\cite{2016-Salimans}. To measure quality and diversity numerically, we trained our model on CIFAR-10 with various margin factor and computed the inception score of generated images. We also trained other models, such as DCGAN, GP-WGAN and BEGAN respectively, and computed their inception scores by ourselves. All of these models are trained by Adam optimizers. The learning rate is 1e-4 for all models except that the GP-WGAN uses 1e-5, since it is unstable with high learning rates.

The results are listed in table \ref{tab:incept-score}. It is shown that the inception scores of our model are higher than that of GP-WGANs, but is still less than DCGANs and BEGANs. The BEGAN model obtains the highest inception score in our experiments, which indicates that it is capable of producing high quality images.

For different $\delta$ value, the TV-WGAN has different inception scores. With the increase of the $\delta$ value, the inception score of the TV-WGAN has also increased, while the standard variance of the score reduced. This seems indicating that the margin factor $\delta$ takes effect for controlling the visual quality of generated images.

\begin{table}
\begin{center}
\begin{tabular}{ccc}
\hline
Models& mean& std\\
\hline
GP-WGAN& 3.75& 0.32\\
DCGAN& 4.53& 0.26\\
BEGAN& 4.83& 0.36\\
TV-WGAN($\delta=0$)& 4.12& 0.42\\
TV-WGAN($\delta=5$)& 4.29& 0.36\\
TV-WGAN($\delta=10$)& 4.47& 0.23\\
\hline
\end{tabular}
\end{center}
\caption{Inception scores.}
\label{tab:incept-score}
\end{table}

\section{Conclusion}\label{section:conclusion}

In this work, we demonstrated problems with gradient penalty in WGAN and introduced an alternative in the form of a total variational regularization in the objective function, which enforce the 1-Lipschitz constraint on the discriminator implicitly. The new approach is much stable in training. Additionally, we introduced a margin factor to control the trade-off between generative diversity and visual quality.

{\small
\bibliographystyle{unsrt}
\bibliography{reference}

\begin{thebibliography}{10}

\bibitem{2016-Radford-DCGAN}
Alec Radford, Luke Metz, and Soumith Chintala.
\newblock Unsupervised representation learning with deep convolutional
  generative adversarial networks.
\newblock In {\em International Conference on Learning Representations}, 2016.

\bibitem{2014-Kingma-SSGAN}
Diederik~P. Kingma and Shakir Mohamed.
\newblock Semi-supervised learning with deep generative models.
\newblock {\em Neural Information Processing Systems}, 27:3581--3589, 2014.

\bibitem{2014-Mirza-CGAN}
Mehdi Mirza and Simon Osindero.
\newblock Conditional generative adversarial nets.
\newblock {\em arXiv:1411.1784}, 2014.

\bibitem{2017-zhu-CycleGAN}
Jun-Yan Zhu, Taesung Park, Phillip Isola, and Alexei~A Efros.
\newblock Unpaired image-to-image translation using cycle-consistent
  adversarial networks.
\newblock In {\em Computer Vision (ICCV), 2017 IEEE International Conference
  on}, 2017.

\bibitem{2017-Zhang-StackGAN}
Han Zhang, Tao Xu, and Hongsheng Li.
\newblock Stackgan: Text to photo-realistic image synthesis with stacked
  generative adversarial networks.
\newblock In {\em International Conference on Computer Vision}, pages
  5908--5916, 2017.

\bibitem{2018-Liu}
Bei Liu, Jianlong Fu, Makoto~P. Kato, and Masatoshi Yoshikawa.
\newblock Beyond narrative description: Generating poetry from images by
  multi-adversarial training.
\newblock In {\em ACM Multimedia}, 2018.

\bibitem{2016-Salimans}
Tim Salimans, Ian~J. Goodfellow, Wojciech Zaremba, Vicki Cheung, and Xi~Chen.
\newblock Improved techniques for training gans.
\newblock {\em Neural Information Processing Systems}, pages 2234--2242, 2016.

\bibitem{2017-Arjovsky-WGAN0}
M.~Arjovsky and L.~Bottou.
\newblock Towards principled methods for training generative adversarial
  networks.
\newblock {\em arXiv:1701.04862}, 2017.

\bibitem{2017-Arjovsky-WGAN1}
M.~Arjovsky, S.~Chintala, and L.~Bottou.
\newblock Wasserstein gan.
\newblock {\em arXiv:1701.07875}, 2017.

\bibitem{2017-Gulrajani-WGAN2}
Ishaan Gulrajani, Faruk Ahmed, Martin Arjovsky, Vincent Dumoulin, and Aaron
  Courville.
\newblock Improved training of wasserstein gans.
\newblock {\em arXiv:1704.00028}, 2017.

\bibitem{2017-Cui}
Shaobo Cui and Yong Jiang.
\newblock Effective lipschitz constraint enforcement for wasserstein gan
  training.
\newblock {\em Computational Intelligence}, pages 74--78, 2017.

\bibitem{2018-Adler}
Jonas Adler and Sebastian Lunz.
\newblock Banach wasserstein gan.
\newblock {\em Neural Information Processing Systems}, 2018.

\bibitem{2018-Gemici}
Mevlana Gemici, Zeynep Akata, and Max Welling.
\newblock Primal-dual wasserstein gan.
\newblock {\em arXiv:1805.09575}, 2018.

\bibitem{2017-Berthelot-BEGAN}
D.~Berthelot, T.~Schumm, and L.~Metz.
\newblock Began: Boundary equilibrium generative adversarial networks.
\newblock {\em arXiv:1703.10717}, 2017.

\bibitem{2014-Goodfellow-GAN}
Ian~J. Goodfellow, Jean Pouget-Abadie, Mehdi Mirza, Bing Xu, David
  Warde-Farley, Sherjil Ozair, Aaron~C. Courville, and Yoshua Bengio.
\newblock Generative adversarial nets.
\newblock {\em Neural Information Processing Systems}, pages 2672--2680, 2014.

\bibitem{2015-Ioffe-BatchNorm}
Sergey Ioffe and Christian Szegedy.
\newblock Batch normalization: Accelerating deep network training by reducing
  internal covariate shift.
\newblock In {\em International Conference on Machine Learning}, pages
  448--456, 2015.

\end{thebibliography}
}
\end{document}